\icmltitlerunning{Evaluating Variational Inference}
\DeclareMathOperator{\E}{\mathrm{E}}
\DeclareMathOperator{\KL}{\mathrm{KL}}
\DeclareMathOperator{\R}{\mathbb{R}}
\DeclareMathOperator{\N}{\mbox{N}}
\newtheorem{proposition}{Proposition} 
\newcommand{\doublehat}[1]{\hat #1}
\definecolor{codegreen}{rgb}{0,0.6,0}
\definecolor{codegray}{rgb}{0.5,0.5,0.5}
\definecolor{codepurple}{rgb}{0.58,0,0.82}
\definecolor{backcolour}{rgb}{0.99,0.99,0.97}
\lstdefinestyle{stan}{
	literate={~}{$\sim$}{1},
	backgroundcolor=\color{backcolour},   
	commentstyle=\color{codegreen},
	otherkeywords = {real, vector, matrix, data, model, parameters, transformed},
	keywordstyle=\color{magenta},
	numberstyle=\tiny\color{codegray},
	stringstyle=\color{codepurple},
	emph={%
		normal, cauchy, inv_gamma, bernoulli_logit, gamma%
	},
	emphstyle=\color{codepurple},%
	basicstyle={\footnotesize,\ttfamily},
	breakatwhitespace=false,         
	breaklines=true,                 
	captionpos=b,                    
	keepspaces=true,                 
	numbers=left,                    
	numbersep=5pt,                  
	showspaces=false,                
	showstringspaces=false,
	showtabs=false,                  
	tabsize=2
}
\begin{document}

\twocolumn[
	\icmltitle{Yes, but Did It Work?: Evaluating Variational Inference}

\begin{icmlauthorlist}
	\icmlauthor{Yuling Yao}{Columbia}
	\icmlauthor{Aki Vehtari }{Aalto}
	\icmlauthor{Daniel Simpson}{Toronto}
	\icmlauthor{Andrew Gelman}{Columbia}
\end{icmlauthorlist}

\icmlaffiliation{Columbia}{Department of Statistics, Columbia University, NY, USA}
\icmlaffiliation{Aalto}{Helsinki Institute for Information Technology, Department of Computer Science, Aalto University,  Finland}
\icmlaffiliation{Toronto}{Department of Statistical Sciences, University of Toronto, Canada}
\icmlcorrespondingauthor{Yuling Yao}{yy2618@columbia.edu}

\icmlkeywords{Variational Inference, diagnostics, Pareto smoothed importance sampling,  Bayesian approximation}

\vskip 0.3in
]
\printAffiliationsAndNotice{}
\begin{abstract}
While it's always possible to compute a variational approximation to a posterior distribution, it can be difficult to discover problems with this approximation. We propose two  diagnostic algorithms to alleviate this problem. The  Pareto-smoothed importance sampling (PSIS) diagnostic  gives a goodness of fit measurement for joint distributions, while simultaneously improving the error in the estimate. The variational   simulation-based calibration (VSBC) assesses the average performance  of point estimates.
\end{abstract}
\section {Introduction}

 Variational Inference (VI), including a large family of posterior approximation methods like stochastic VI (\citealt{hoffman2013stochastic}), black-box VI (\citealt{ranganath2014black}), automatic differentiation VI (ADVI, \citealt{kucukelbir2017automatic}), and many other variants, has emerged as a widely-used method for scalable Bayesian inference. These methods come with few theoretical guarantees and it's difficult to assess how well the computed variational posterior approximates the true posterior.
 
 
Instead of computing expectations or sampling draws from the posterior $p(\theta \mid y)$, variational inference fixes a family of  approximate densities $\mathcal{Q}$, and finds the member $q^*$ minimizing the Kullback-Leibler (KL) divergence to the true posterior: $ \KL\left(  q(\theta), p(\theta
\mid y)\right). $
This is equivalent to maximizing the {evidence lower bound} (ELBO): 
\begin{align}
 \quad \mathrm{ELBO}(q)=  \int_\Theta \left(  \log p(\theta,y)-\log q(\theta) \right) q(\theta) d\theta. 
\end{align}\label{eqn:objecitive}
  
There are many situations where the VI approximation is flawed. This can be due to the slow convergence of the optimization problem, the inability  of the approximation family to capture the true posterior, the asymmetry of the true distribution, the fact that the direction of the KL divergence under-penalizes approximation with too-light tails, or all these reasons.   We need a diagnostic algorithm to test whether the VI approximation is useful.

There are two levels of diagnostics for variational inference. First the convergence test should be able to tell if the objective function  has converged to a local optimum. When the optimization problem \eqref{eqn:objecitive} is solved through  stochastic gradient descent (SGD),  the convergence can be assessed  by monitoring the running average of ELBO changes. Researchers have introduced many convergence tests  based on  the asymptotic property of  stochastic approximations \citep[e.g.,][]{sielken1973stopping, stroup1982new, pflug1990non, wada2015stopping, chee2017convergence}.  Alternatively, \citet{blei2017variational} suggest monitoring the expected log predictive density by holding out an independent test dataset. 
After convergence,  the optimum  is still an approximation to the truth.  This  paper is focusing on the second level of VI diagnostics whether the  variational posterior $q^*(\theta)$ is close enough to the true posterior $p(\theta|y)$ to be used in its place.

Purely relying on the objective function or the equivalent ELBO does not solve the problem.  An unknown multiplicative   constant  exists in $ p(\theta, y) \propto  p(\theta \mid y) $ that  changes  with reparametrization,  making it meaningless to compare ELBO across two  approximations.  Moreover, the ELBO is a quantity on an uninterpretable scale, that is it's not clear at what value of the ELBO we can begin to trust the variational posterior. This makes it next to useless as a method to assess how well the variational inference has fit.

In this paper we propose two diagnostic methods that assess, respectively, the quality of the entire variational posterior for a particular data set, and the average bias of a point estimate produced under correct model specification. 

The first method is based on generalized Pareto distribution diagnostics used to assess the quality of a importance sampling proposal distribution in Pareto smoothed importance sampling \citep[PSIS,][]{vehtari2015pareto}.   The benefit of PSIS diagnostics is two-fold.  First, we can tell the  discrepancy between the approximate and the true distribution by the estimated continuous $\hat k$ value. When it is larger than a pre-specified  threshold, users should be alert of the limitation of current variational inference computation and consider further tuning it or turn to exact sampling like Markov chain Monte Carlo (MCMC). Second, in the case  when $\hat k$  is small,  the fast convergence rate of the importance-weighted Monte Carlo integration guarantees a better estimation accuracy.  In such sense, the  PSIS diagnostics could also be viewed as a post-adjustment for VI approximations.  Unlike  the  second-order correction \citet{giordano2017covariances}, which relies  on an un-testable unbiasedness assumption,
 we make diagnostics and adjustment at the same time.

The second diagnostic considers only the quality of the median of the variational posterior as a point estimate (in Gaussian mean-field VI this corresponds to the modal estimate). This diagnostic assesses the  average behavior of the point estimate under data from the model and can indicate when a systemic bias is present. The magnitude of that bias can be monitored while computing the diagnostic.  This diagnostic can also assess the average calibration of univariate functionals of the parameters, revealing if the posterior is under-dispersed, over-dispersed, or biased. This diagnostic could be used as a partial justification for using the second-order correction of  \citet{giordano2017covariances}.

\section{Is the Joint Distribution Good Enough? }
If we can draw a sample $(\theta_1, \dots, \theta_S)$ from $p(\theta | y)$,  the expectation of any integrable function  $\E_{p} [h(\theta)]$ can be estimated by Monte Carlo integration:
$  \sum_{s=1}^{S} h(\theta_s) / S\! \! \!\xrightarrow[]{ \enskip S\to \infty \enskip } \!\!\! \E_p \left[h(\theta)\right]. $
Alternatively, given samples $(\theta_1, \dots, \theta_S)$ from a proposal distribution $q(\theta)$,
the \emph {importance sampling} (IS) estimate   is
$ \left( {\sum_{s=1}^S h(\theta_s)r_s} \right) / {\sum_{s=1}^S   r_s} $,
where the importance ratios $r_s$ are defined as 
\begin{align} \label{ratios}
r_s=\frac{p(\theta_s, y)}{q(\theta_s)}.
\end{align}

In general,  with a sample $(\theta_1, \dots, \theta_S)$ drawn from the variational posterior $q(\theta)$, we consider a family of estimates with the form 
\begin{align}\label{eqn:is}
E_p [h(\theta)]\approx \frac{ \sum_{s=1}^S h(\theta_s) w_s }{  \sum_{s=1}^S w_s},  
\end{align}
which contains two extreme cases:
\begin{enumerate}
	\item When  $w_s  \equiv 1$, estimate \eqref{eqn:is} becomes the plain VI estimate  that is we completely trust the VI approximation. In general, this will be biased to an unknown extent and inconsistent. However, this estimator has small variance.
	\item When $w_s=r_s$, \eqref{eqn:is} becomes \emph{importance sampling}. The  strong law of large numbers ensures it is consistent as $S\to \infty$, and with small $O(1/S)$ bias due to self-normalization. But the IS estimate may have a large or infinite variance.  
\end{enumerate}

There are two questions to be answered. First, can we find a better bias-variance trade-off than both plain VI and IS? 

Second,  VI approximation $q(\theta)$ is not designed for an optimal IS proposal, for it has a lighter tail than $p(\theta|y)$ as a result of  entropy penalization, which lead to a heavy right tail of $r_s$.  A few large-valued $r_s$ dominates the summation, bringing in large uncertainty. But does the finite sample performance of IS or stabilized IS contain the information about the dispensary measure between  $q(\theta)$ and $p(\theta|y)$?

\subsection{Pareto Smoothed Importance Sampling}
The solution to the first question is the Pareto smoothed importance sampling (PSIS). We give a brief review, and more details can be found in \citet{vehtari2015pareto}.

A generalized Pareto distribution with shape parameter $k$ and location-scale parameter $(\mu, \tau)$ has the density
$$
p(y|\mu, \sigma, k)=\left\{
\begin{aligned}
& \frac{1}{\sigma} \left( 1+k\left(  \frac{y-\mu}{\sigma} \right) \right) ^{-\frac{1}{k}-1}, & k \neq 0. \\
& \frac{1}{\sigma} \exp\left(  \frac{y-\mu}{\sigma} \right),   &  k = 0.\\
\end{aligned}
\right.
$$
PSIS  stabilizes importance ratios by fitting  a generalized Pareto distribution using the largest $M$ samples of $r_i$, where $M$ is empirically set as $\min(S/5, 3\sqrt{S})$. It then reports the estimated shape parameter $\hat k$ and replaces the $M$ largest $r_s$ by their expected value under the fitted generalized Pareto distribution. The other importance weights remain unchanged. We further truncate  all weights at the raw weight maximum $\max(r_s)$. The resulted smoothed weights are denoted by $w_s$, based on which a lower variance estimation can be calculated through (\ref{eqn:is}). 

Pareto smoothed importance sampling can be considered as Bayesian version of importance sampling with prior on the largest importance ratios. It has smaller mean square errors than plain IS and truncated-IS \citep{ionides2008truncated}.

\subsection{Using PSIS as a Diagnostic Tool}\label{sec:PSIS}
The fitted shape parameter $\doublehat k$, turns out to provide the desired diagnostic measurement between  the true posterior $p(\theta|y)$ and the VI approximation $q(\theta)$.   A  generalized Pareto distribution with shape $k$ has finite moments up to order $1/k$, thus any positive $\doublehat k$ value can be viewed as an estimate to
\begin{align}\label{eqn:k_hat}
 k=\inf\left\{  k'>0 : E_{q}\left( \frac{ p(\theta| y)}{  q(\theta)}  \right)^{\frac{1}{k'}} <\infty   \right\}.
\end{align} 
$\hat k$ is invariant under any constant multiplication of  $p$ or $q$, which explains why we can suppress the marginal likelihood (normalizing constant) $p(y)$ and replace the intractable $p(\theta|y)$ with $p(\theta,y )$ in \eqref{ratios}.

After log transformation, \eqref{eqn:k_hat} can be interpreted as  \emph{R\'enyi divergence} \citep{renyi1961measures}  with order $\alpha$ between  $p(\theta|y)$ and  $q(\theta)$:
\begin{align*}  
& k=\inf\left\{ k'>0 : D_{\frac{1}{k'}} \left(  p || q  \right)<\infty \right\},  \\
\mathrm{where}\: &D_{\alpha} \left(  p || q  \right) = \frac{1}{\alpha-1} \log \int_{\Theta} p(\theta)^{\alpha} q(\theta)^{1-\alpha}  d \theta. 
\end{align*}
It is well-defined since  {R\'enyi divergence} is monotonic increasing on order $\alpha$.  Particularly, when $ k >0.5$,   the  \emph{$\chi^2$ divergence}  $\chi (p || q)$, becomes infinite, and when $ k >1$, $D_{1}(p || q)=\KL(p,q) = \infty$, indicating a disastrous VI approximation, despite the fact that $\KL(q,p)$ is always minimized among the variational family. The connection to R\'enyi divergence holds when $k>0$. When $k<0$, it predicts the importance ratios are bounded from above.

This also illustrates the advantage of a continuous $\hat k$ estimate in our approach over only testing the existence of  second moment of $E_q(q/p)^2$ \citep{ epifani2008case, koopman2009testing} -- it indicates if the  R\'enyi divergence between $q$ and $p$ is finite for all continuous  order  $\alpha>0$.

Meanwhile, the shape parameter $ k$  determines the finite sample convergence rate of both IS and  PSIS adjusted estimate.    \citet{geweke1989bayesian} shows when  $\E_q[ r (\theta)^2 ]  < \infty$ and $\E_q[ \bigl( r (\theta)h(\theta) \bigr) ^2]< \infty $  hold (both conditions can be tested by  $ \hat k$ in our approach),  the central limit theorem guarantees the square root convergence rate. Furthermore, when $ k< 1/3 $, then the Berry-Essen theorem  states faster convergence rate to normality \citep{chen2004normal}.
\citet{cortes2010learning} and \citet{cortes2013relative} also link  the finite sample convergence rate of IS with   the number of existing moments of importance ratios. 

PSIS has smaller estimation error than the plain VI estimate, which we will experimentally verify this in Section \ref{sec:app}. A large $\hat k$ indicates the failure of finite sample PSIS, so it further indicates the large estimation error of VI approximation.  Therefore, even when the researchers' primary goal is not to use variational approximation $q$ as  an PSIS proposal, they should be alert by a large $\hat k$ which tells the  discrepancy between the VI approximation result and the true posterior.

According to empirical study in \citet{vehtari2015pareto}, we set the  threshold of  $\hat k$ as follows.
\begin{itemize}
	\item If $\doublehat k<0.5$, we can invoke the  central limit theorem to suggest PSIS has a fast convergence rate. We conclude the variational approximation $q$ is close enough to the  true density. We recommend further using PSIS to adjust the estimator \eqref{eqn:is} and calculate other divergence measures.
	\item If $0.5<\doublehat k<0.7$,  we still observe practically useful finite sample convergence rates and acceptable Monte Carlo error for PSIS. It indicates  the variational approximation $q$ is not perfect but still useful.  Again, we recommend PSIS to shrink errors.
	\item If $\doublehat k>0.7$, the PSIS convergence rate becomes impractically slow, leading to a large mean square error, and a even larger error for plain VI estimate.  We should consider  tuning the variational methods (e.g., re-parametrization, increase iteration times, increase mini-batch size, decrease learning rate, et.al.,) or turning to exact MCMC.  Theoretically   $ k $ is always smaller than 1, for $E_{q}\left[ { p(\theta|y)}/{  q(\theta)}  \right]=p(y) <\infty$, while in practice  finite sample estimate $\doublehat k$ may be larger than 1, which indicates even worse finite sample performance.
\end{itemize}

The  proposed diagnostic method is summarized in Algorithm \ref{alg:psis}.

\begin{algorithm}[tb]
	\caption{\em PSIS diagnostic}\label{alg:psis}
	\begin{algorithmic}[1]
		\STATE {\bfseries Input:} the  joint density function $p(\theta, y)$;  	number of posterior samples $S$;   number of tail samples $M$.
		\STATE Run variational inference to $p(\theta|y)$, obtain VI approximation $q(\theta)$;
		 
		\STATE 	Sample  $(\theta_s, s=1, \dots, S)$  from  $q(\theta)$;
		\STATE 	Calculate the importance ratio $r_{s}= p (\theta_s, y) /  q(\theta_s)$;
	 
		\STATE 	Fit  generalized Pareto distribution to the $M$ largest $ r_{s}$;
		\STATE 	Report the shape parameter $\doublehat k$;
		\IF{   $\hat k    < 0.7$  }
		\STATE	  Conclude VI approximation $q(\theta)$ is close enough to the unknown truth $p(\theta|y)$;
		\STATE	  Recommend further shrinking errors by PSIS.
		\ELSE
		\STATE	Warn users that the VI approximation is not  reliable.
		\ENDIF
	\end{algorithmic}
\end{algorithm}

\subsection{Invariance Under Re-Parametrization}
Re-parametrization is common in variational inference. Particularly, the \emph{reparameterization trick} \citep{rezende2014stochastic}  rewrites the objective function to make  gradient calculation easier in Monte Carlo integrations.  

A nice property of  PSIS diagnostics is that the $\hat k$ quantity is invariant under any re-parametrization.  Suppose $\xi$ = $T(\theta)$ is a smooth transformation, then the density ratio of $\xi$ under the target $p$ and the proposal $q$ does not change: 
$$ \frac{p(\xi)}{q(\xi)}   =  \frac{p\left( T^{-1}(\xi) \right)   | \mathrm{det} J_{\xi} T^{-1}(\xi)|  }  { q\left( T^{-1}(\xi) \right)   | \mathrm{det} J_{\xi} T^{-1}(\xi)| } =  \frac{p\left( \theta\right) }{q(\theta)}   $$
Therefore, ${p(\xi)}/{q(\xi)}$  and $ {p(\theta)}/{q(\theta)} $  have the same distribution under $q$, making it free to choose any convenient parametrization form when calculating $\hat k$.  

However,  if the re-parametrization changes the approximation family, then it will change the computation result, and PSIS diagnostics will change accordingly.  Finding the optimal parametrization form, such that the  re-parametrized posterior distribution lives exactly in the approximation family
$$p(T(\xi))=p\left( T^{-1}(\xi) \right)   |J_{\xi} T^{-1}(\xi)| \in \mathcal{Q} , $$
can be as hard as finding the true posterior. The PSIS diagnostic can guide the choice of  re-parametrization by simply comparing the $\hat k$ quantities of any parametrization.   Section \ref{sec:school} provides a practical example.

\subsection{Marginal PSIS Diagnostics Do Not Work}
As dimension increases, the VI posterior tends to be further away from the truth, due to the limitation of approximation families. As a result, $ k$ increases,  indicating inefficiency of importance sampling.  This is not the drawback of PSIS diagnostics. Indeed, when the focus is the joint distribution, such behaviour accurately reflects the quality of the variational approximation to the joint posterior.

Denoting the one-dimensional  true and approximate marginal density of   the $i$-th coordinate  $\theta_{i}$ as $p(\theta_{i}|y)$ and $q(\theta_{i})$, the  marginal ${k}$ for   $\theta_{i}$  can be defined as
$$  k_i=   \inf\left\{ 0< k' <1 :  \E_{q}\left(  \frac{p(\theta_i|y) }{q(\theta_i) } \right)^{\frac{1}{k'}}  < \infty  \right\}.$$
The marginal $ k_i$ is never larger (and usually smaller) than the joint ${k}$  in (\ref{eqn:k_hat}). 
\begin{proposition}\label{prop:marginal}
	For any two distributions $p$ and $q$ with support $\Theta$ and the margin index $i$,  if there is a number $\alpha>1$ satisfying $ E_q \left( p(\theta)/q(\theta)\right) ^ \alpha <\infty$, then $ E_q \left( p(\theta_i)/q(\theta_i)\right) ^ \alpha <\infty$.
\end{proposition}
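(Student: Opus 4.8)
The plan is to recognize the marginal importance ratio as a \emph{conditional expectation} of the joint importance ratio under $q$, and then exploit the convexity of $t \mapsto t^{\alpha}$ for $\alpha>1$ via Jensen's inequality. Write $\theta = (\theta_i, \theta_{-i})$, splitting off the $i$-th coordinate from the remaining coordinates $\theta_{-i}$. Because both $p$ and $q$ are assumed to have full support $\Theta$, every ratio appearing below is well defined and strictly positive, so no degenerate denominators arise.

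First I would establish the key identity. Using the definition of the marginal $p(\theta_i) = \int p(\theta_i, \theta_{-i})\, d\theta_{-i}$ together with the factorization $q(\theta_i,\theta_{-i}) = q(\theta_{-i}\mid \theta_i)\, q(\theta_i)$, a direct manipulation gives
\begin{align*}
\frac{p(\theta_i)}{q(\theta_i)}
= \int \frac{p(\theta_i,\theta_{-i})}{q(\theta_i,\theta_{-i})}\, q(\theta_{-i}\mid \theta_i)\, d\theta_{-i}
= \E_{q}\!\left[\left.\frac{p(\theta)}{q(\theta)}\,\right|\theta_i\right].
\end{align*}
In words: the marginal ratio is the conditional expectation, under $q$, of the joint ratio given the single coordinate $\theta_i$.

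Second, since $t\mapsto t^{\alpha}$ is convex on $(0,\infty)$ for $\alpha>1$, the conditional form of Jensen's inequality yields, pointwise in $\theta_i$, the bound $\bigl(p(\theta_i)/q(\theta_i)\bigr)^{\alpha} \le \E_{q}\!\bigl[(p(\theta)/q(\theta))^{\alpha}\mid \theta_i\bigr]$. Taking the expectation over $\theta_i\sim q$ and invoking the tower property collapses the iterated conditional expectation back to the full expectation, so
\begin{align*}
\E_{q}\!\left[\left(\frac{p(\theta_i)}{q(\theta_i)}\right)^{\!\alpha}\right]
\le \E_{q}\!\left[\left(\frac{p(\theta)}{q(\theta)}\right)^{\!\alpha}\right] < \infty,
\end{align*}
which is exactly the claim.

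The only delicate point, and the step I would treat as the main obstacle, is the conditional-expectation identity in the first display: it requires a Fubini/Tonelli argument to interchange the integral defining the marginal with the ratio, and it relies on the full-support hypothesis so that $q(\theta_i)>0$ and the conditional density $q(\theta_{-i}\mid\theta_i)$ is a genuine probability density in $\theta_{-i}$ for each fixed $\theta_i$. Once that identity is secured, the remainder is a one-line application of Jensen plus the tower property. I would also note, consistent with the scale-invariance already emphasized for $\hat k$, that the chain of (in)equalities never uses normalization of $p$ or $q$: the conditional $q(\theta_{-i}\mid\theta_i)$ integrates to one by construction regardless of the overall mass of the joint densities.
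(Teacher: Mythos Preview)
Your proof is correct and is essentially the same argument as the paper's. The only cosmetic difference is framing: you write the marginal ratio as the conditional expectation $\E_q[p(\theta)/q(\theta)\mid\theta_i]$ and apply conditional Jensen directly, whereas the paper factors $p(\theta)/q(\theta)=\bigl(p(\theta_i)/q(\theta_i)\bigr)\bigl(p(\theta_{-i}\mid\theta_i)/q(\theta_{-i}\mid\theta_i)\bigr)$ and applies Jensen to show the inner integral $\int (p(\theta_{-i}\mid\theta_i)/q(\theta_{-i}\mid\theta_i))^{\alpha}\,q(\theta_{-i}\mid\theta_i)\,d\theta_{-i}\ge 1$; both routes yield the identical pointwise inequality before integrating over $\theta_i$.
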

Proposition \ref{prop:marginal} demonstrates why the importance sampling  is usually inefficient in high dimensional sample space, in that the joint estimation is ``worse'' than any of the marginal estimation.

Should we  extend the PSIS diagnostics to marginal distributions?   We find  two reasons why the marginal PSIS diagnostics can be misleading. Firstly, unlike the easy access to the unnormalized joint posterior distribution $p(\theta, y)$, the true marginal posterior density  $p(\theta_i|y)$ is typically unknown, otherwise one can conduct one-dimensional sampling easily to obtain the the marginal samples.  Secondly,  a smaller $\hat k_i$ does not necessary guarantee a well-performed marginal estimation. The marginal approximations in variational inference can both over-estimate and  under-estimate the tail thickness of one-dimensional distributions, the  latter situation gives rise to a smaller $\hat k_i$.  Section \ref{sec:school} gives an example, where the marginal approximations with extremely  small marginal ${k}$ have large estimation errors. This does not happen in the joint case as the direction of the Kullback-Leibler divergence $q^*(\theta)$ strongly penalizes too-heavy tails, which makes it unlikely that the tails of the variational posterior are significantly heavier than the tails of the true posterior.

\section{Assessing the Average Performance of the Point Estimate}
The proposed PSIS diagnostic assesses the quality of the VI approximation to the full posterior distribution. It is often observed that while the VI posterior may be a poor approximation to the full posterior, point estimates that are derived from it may still have good statistical properties. In this section, we propose a new method for assessing the calibration of the center of a VI posterior. 

\subsection{The Variational Simulation-Based Calibration (VSBC) Diagnostic} 

This diagnostic is based on the proposal of \citet{cook2006validation} for validating general statistical software.  They noted that if $\theta^{(0)} \sim p(\theta)$ and  $y\sim p( y \mid \theta^{(0)} )$, then $${\Pr}_{(y,\theta^{(0)} )}\left({\Pr}_{\theta \mid y}(\theta <\theta^{(0)} ) \leq \cdot)\right) = \text{Unif}_{[0,1]}([0,\cdot]).$$ 

To use the observation of \citet{cook2006validation} to assess the performance of a VI point estimate, we propose the following procedure. Simulate $M>1$ data sets $\{{y}_j\}_{j=1}^M$ as follows: Simulate $\theta_{j}^{(0)} \sim p(\theta)$ and then simulate ${y}_{(j)} \sim p(y \mid \theta_{j}^{(0)} )$, where   ${y}_{(j)}$ has the same dimension as $y$.  For each of these data sets, construct a variational approximation to $p(\theta \mid y_{j}) $ and compute the marginal calibration probabilities $p_{ij} = \Pr_{\theta \mid y_{(j)}}\left( \theta_i \leq [\theta_{j}^{(0)} ]_i\right)$.

To apply the full procedure of \citet{cook2006validation}, we would need to test $\operatorname{dim}(\theta)$ histograms for uniformity, however this would be too stringent a check as, like our PSIS diagnostic, this test is only passed if the variational posterior is a good approximation to the true posterior.   Instead, we follow an observation of \citet{anderson1996method} from the probabilistic forecasting validation literature and note that  asymmetry in the histogram for $p_{i:}$ indicates bias in the variational approximation to the marginal posterior $\theta_i \mid y$.

The VSBC diagnostic tests for symmetry of the marginal calibration probabilities around $0.5$ and  either by visual inspection of the histogram or by using a Kolmogorov-Smirnov (KS) test to evaluate whether $p_{i:}$ and $1-p_{i:}$ have the same distribution.  When $\theta$ is a high-dimensional parameter, it is important to interpret the results of any hypothesis tests through a multiple testing lens.

\begin{algorithm}[tb]
	\caption{\em VSBC marginal diagnostics}\label{alg:VSBC}
	\begin{algorithmic}[1]
		\STATE {\bfseries Input:} prior density $p(\theta)$, data likelihood $p(y 
		\mid \theta)$;     number of replications $M$;  parameter dimensions $K$; 
		\FOR {$j=1:M$}
		\STATE Generate $\theta_{j}^{(0)}$ from prior $p(\theta)$;
	
		\STATE Generate a size-$n$ dataset $\left( y_{(j)}\right)  $ from $p(y \mid \theta_{j}^{(0)})$;
		\STATE Run variational inference using dataset $y_{(j)} $, obtain a VI approximation distribution $q_j(\cdot)$
			\FOR {$i=1:K$}
			 \STATE Label ${\theta}_{ij}^{(0)} $ as the $i$-th marginal component of $\theta_{j}^{(0)}$;  Label ${\theta}_{i}^* $ as the $i$-th marginal component of $\theta^*$;
			 \STATE  Calculate $p_{ij}=\mathrm{Pr}(\theta_{ij}^{(0)}<\theta^*_i \mid \theta^*\sim q_j  )$
		\ENDFOR
\ENDFOR
		
	\FOR {$i=1:K$}
\STATE  Test if the distribution of $\left\{ p_{ij}\right\}_{j=1}^{M} $ is symmetric;
\STATE    If rejected,  the VI approximation is biased in its $i$-th margin.
\ENDFOR
\end{algorithmic}
\end{algorithm}

\subsection{Understanding the VSBC Diagnostic}

Unlike the PSIS diagnostic, which focuses on a the performance of variational inference for a fixed data set $y$, the VSBC diagnostic assesses the \emph{average} calibration of the point estimation over all datasets that could be constructed from the model. Hence, the VSBC diagnostic operates under a different paradigm to the PSIS diagnostic and we recommend using both as appropriate. 

There are two disadvantages to this type of calibration when compared to the PSIS diagnostic.  As is always the case when interpreting hypothesis tests, just because something works on average doesn't mean it will work for a particular realization of the data. The second disadvantage is that this diagnostic does not cover the case where the observed data is not well represented by the model.  We suggest interpreting the diagnostic conservatively: if a variational inference scheme fails the diagnostic, then it will not perform well on the model in question.  If the VI scheme passes the diagnostic, it is not guaranteed that it will perform well for real data, although if the model is well specified it should do well.

The VSBC diagnostic has some advantages compared to the PSIS diagnostic. It is well understood that, for complex models, the VI posterior can be used to produce a good point estimate even when it is far from the true posterior. In this case, the PSIS diagnostic will most likely indicate failure.  The second advantage is that unlike the PSIS diagnostic, the VSBC diagnostic considers one-dimensional marginals $\theta_i$ (or any functional $h(\theta)$), which allows for a more targeted interrogation of the fitting procedure.

With stronger assumptions, The VSBC test can be formalized as   in Proposition \ref{prop:VSBC}.

\begin{proposition}\label{prop:VSBC}
	Denote $\theta$  as  a one-dimensional parameter  that is of interest.
	Suppose in addition we have: 
	(i) the VI approximation  $q$ is symmetric;
	(ii) the true posterior $p(\theta |y) $ is symmetric.
	If the VI estimation $q$ is unbiased, i.e., $\E_{\theta\sim q(\theta|y)}\theta =  \E_{\theta \sim p(\theta|y)} \theta,$
	then the  distribution of  VSBC $p$-value is symmetric. 
	Otherwise, if the VI estimation  is positively/negatively biased, 
	then the  distribution of  VSBC $p$-value is right/left skewed. 
\end{proposition}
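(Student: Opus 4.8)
The plan is to compute the exact law of the VSBC $p$-value and to reduce the qualitative claim about skewness to a precise stochastic-ordering statement between the $p$-value and its reflection. Throughout I write the $p$-value in the orientation $p = \Pr_{\theta\sim q(\cdot\mid y)}(\theta \le \theta^{(0)}) = F_q(\theta^{(0)};y)$, where $F_q(\cdot;y)$ is the CDF of the variational posterior; the opposite orientation used in Algorithm~\ref{alg:VSBC} simply replaces $p$ by $1-p$ and flips every ``right'' to ``left,'' leaving the symmetric case unchanged. First I would recall the observation of \citet{cook2006validation} that drives everything: since $\theta^{(0)}\sim p(\theta)$ and $y\sim p(y\mid\theta^{(0)})$, the joint density factors as $p(\theta^{(0)})p(y\mid\theta^{(0)}) = p(y)\,p(\theta^{(0)}\mid y)$, so conditionally on $y$ the generating parameter follows the \emph{true} posterior, $\theta^{(0)}\mid y \sim p(\cdot\mid y)$. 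Hence, writing $F_p(\cdot;y)$ for the true-posterior CDF and $m_p(y), m_q(y)$ for the centers of symmetry of $p(\cdot\mid y)$ and $q(\cdot\mid y)$, every statement can be proved conditionally on $y$ and then integrated over $y\sim p(y)$.

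Next I would cash out the hypotheses at the level of the centers. Under assumptions (i)--(ii) each posterior is symmetric, so (assuming the means exist) its center of symmetry equals its mean; the unbiasedness hypothesis $\E_{\theta\sim q(\theta\mid y)}\theta=\E_{\theta\sim p(\theta\mid y)}\theta$ therefore becomes $m_q(y)=m_p(y)$, and a positive (resp.\ negative) bias becomes $m_q(y)>m_p(y)$ (resp.\ $<$), which I will take to hold with a consistent sign across datasets. I would then formalize ``symmetric/right-skewed'' as a comparison between $p$ and $1-p$: symmetry about $1/2$ is $p\stackrel{d}{=}1-p$, while right-skew (mass pushed toward $0$) is the stochastic order $\Pr(p\le t)\ge\Pr(1-p\le t)$ for every $t\in[0,1]$.

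Assuming $F_q(\cdot;y)$ is continuous and strictly increasing so that $a:=F_q^{-1}(t;y)$ is well defined, I compute both sides conditionally on $y$. Using $\theta^{(0)}\mid y\sim p(\cdot\mid y)$ gives $\Pr(p\le t\mid y)=\Pr(\theta^{(0)}\le a\mid y)=F_p(a;y)$. For the reflection, symmetry of $q$ about $m_q$ gives $1-F_q(x)=F_q(2m_q-x)$, whence $\Pr(1-p\le t\mid y)=\Pr(\theta^{(0)}\ge 2m_q(y)-a\mid y)=1-F_p(2m_q(y)-a;y)$; applying symmetry of $p$ about $m_p$ in the form $1-F_p(x)=F_p(2m_p-x)$ rewrites this as $F_p\bigl(a-2(m_q(y)-m_p(y));y\bigr)$. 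Thus conditionally on $y$ the two quantities are $F_p(a;y)$ and $F_p\bigl(a-2(m_q(y)-m_p(y));y\bigr)$, differing only by a shift of $2(m_q-m_p)$ in the argument of a nondecreasing function.

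From here the conclusions follow by monotonicity of $F_p$ and then integration over $y$. When $m_q=m_p$ the two are equal for all $t$, so $p\stackrel{d}{=}1-p$ and the $p$-value distribution is symmetric about $1/2$; when $m_q>m_p$ the shift is negative, so $F_p(a;y)\ge F_p\bigl(a-2(m_q-m_p);y\bigr)$, giving $\Pr(p\le t)\ge\Pr(1-p\le t)$ for all $t$, i.e.\ right-skew; the case $m_q<m_p$ reverses the inequality and yields left-skew. I expect the main obstacle to be conceptual rather than computational: stating ``skewness'' as the stochastic order between $p$ and $1-p$ so that the proposition becomes provable at all, keeping the orientation convention consistent with the sign of the bias, and discharging the regularity needed to invert $F_q$ (continuity and strict monotonicity of the variational CDF, and existence of the means so that symmetry pins the center to the mean). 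The reflection identities and the final monotonicity step are then routine.
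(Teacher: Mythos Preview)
Your proposal is correct and follows essentially the same route as the paper's proof: both invoke the \citet{cook2006validation} observation that $\theta^{(0)}\mid y\sim p(\cdot\mid y)$, then use the reflection identity from symmetry of $q$ (the paper via $Q_{1-x}(q)=2\E_q\theta-Q_x(q)$, you via $1-F_q(x)=F_q(2m_q-x)$), swap $\E_q$ for $\E_p$ under unbiasedness, and finish with the symmetry of $p$; the biased case is handled in both by noting that the swap step becomes an inequality. Your version is somewhat more explicit---you formalize ``skewed'' as a stochastic order between $p$ and $1-p$, flag the orientation convention relative to Algorithm~\ref{alg:VSBC}, and name the regularity assumptions (invertibility of $F_q$, existence of means)---but the underlying argument is the same CDF/quantile reflection computation.
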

The symmetry of the true posterior is a stronger assumption than is needed in practice for this result to hold.  In the forecast evaluation literature, as well as the literature on posterior predictive checks, the symmetry of the histogram is a commonly used heuristic to assess the potential bias of the distribution. In our tests, we have seen the same thing occurs: the median of the variational posterior is close to the median of the true posterior when the VSBC histogram is symmetric. We suggest again that this test be interpreted conservatively: if the histogram is not symmetric, then the VI is unlikely to have produced a point estimate close to the median of the true posterior. 
\section{Applications}\label{sec:app}
Both PSIS and VSBC diagnostics are applicable to any variational inference algorithm. Without loss of generality, we implement  mean-field Gaussian automatic differentiation variational inference (ADVI) in this section.

\subsection{Linear Regression}
Consider a Bayesian linear regression $y\sim \N(X\beta, \sigma^2)$ with prior $\{\beta_i\}_{i=1}^{K}\sim \N(0,1), \sigma\sim \mathrm{gamma}(.5,.5)$. We  fix sample size $n=10000$ and number of regressors $K=100$.   

Figure \ref{fig:VSBC_linear} visualizes the VSBC diagnostic, showing the distribution of VSBC $p$-values of the first  two regression coefficients $\beta_1,$ $\beta_2$ and  $\log \sigma$ based on $M=1000$ replications.  The two sided Kolmogorov-Smirnov test for $p_{:}$ and $1-p_{:}$ is only rejected for $p_{\sigma:}$, suggesting the VI approximation is in average marginally unbiased for $\beta_1$ and $\beta_2$, while $\sigma$ is over-estimated as $p_\sigma$ is right-skewed.  The under-estimation of  posterior variance is  reflected by the U-shaped distributions.

Using one randomly generated dataset in the same problem, the PSIS $\doublehat k$ is  $0.61$, indicating the  joint  approximation  is  close to the true  posterior. However, the performance of ADVI is sensitive to the stopping time,  as in any other optimization problems.  As displayed in the left panel of Figure \ref{fig:linear_large_n}, changing the  threshold  of relative ELBO change from a conservative  $10^{-5}$ to the default recommendation $10^{-2}$ increases $\doublehat k$ to $4.4$, even though   $10^{-2}$ works fine for many other simpler problems. In this example, we can also view $\doublehat k$ as a convergence test. The right panel shows  $\doublehat k$ diagnoses estimation error, which eventually become negligible in PSIS adjustment when $\doublehat k<0.7$.   To account for the uncertainty  of stochastic optimization  and $\hat k$ estimation, simulations are repeated 100 times. 

\begin{figure}
	\centerline{\includegraphics[width=\columnwidth]{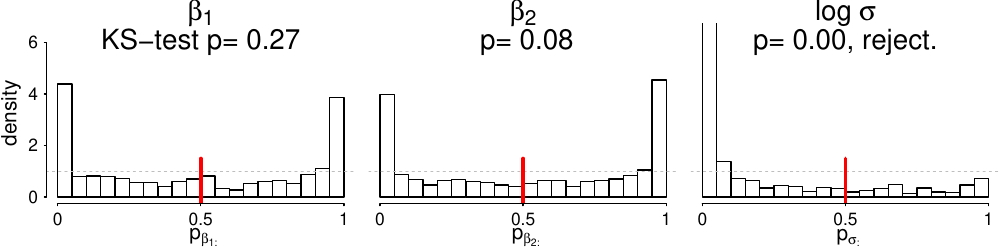}}
	\vskip -0.1in
	\caption{\em VSBC diagnostics for $\beta_1, \beta_2$ and $\log \sigma$  in the Bayesian linear regression example. The VI estimation overestimates $\sigma$  as $p_\sigma$ is right-skewed, while $\beta_1$ and $\beta_2$ is unbiased as the two-sided KS-test is not rejected.} \label{fig:VSBC_linear} \vskip -0.05in
\end{figure}

\begin{figure}
	\centerline{\includegraphics[width=\columnwidth]{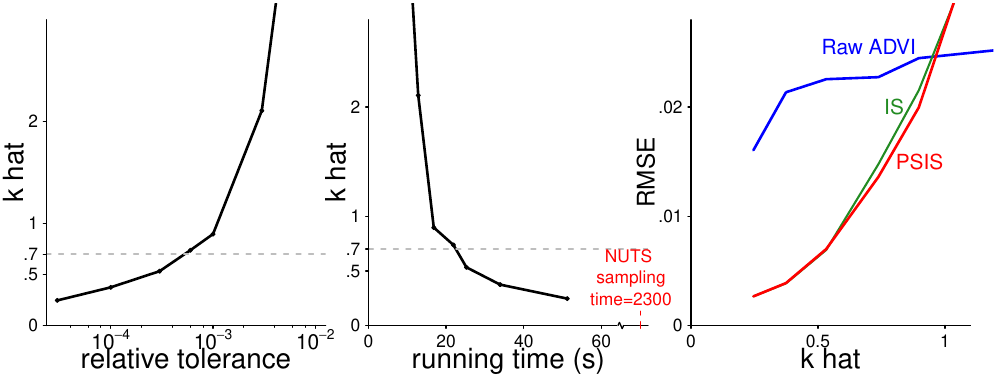}}
	\vskip -0.1in
	\caption{\em  ADVI is sensitive to the stopping time in the linear regression example.  The default 0.01 threshold lead to a fake convergence, which can be diagnosed by monitoring  PSIS $\hat k$. PSIS adjustment always shrinks the estimation errors.} \label{fig:linear_large_n}\vskip -0.05in
\end{figure}

\subsection{Logistic Regression}

Next we  run ADVI to a logistic regression $Y \sim \mathrm{Bernoulli} \left( \mathrm{logit}^{-1}( \beta X  ) \right) $ with a flat prior on $\beta$.  We  generate  $X=(x_1,\dots, x_n)$ from  $\hbox{N}(0, (1-\rho) I_{K\times K}+\rho 1_{K\times K})$  such that the correlation in design matrix is $\rho$, and $\rho$ is changed from 0 to 0.99.   The first panel in Figure \ref{fig:logistic_lpd} shows PSIS $\hat k$ increases as the design matrix correlation increases. It is not monotonic because $\beta$ is initially negatively correlated when $X$ is independent. A large $\rho$ transforms into a large correlation for posterior distributions in $\beta$, making it harder to be approximated by a mean-field family, as can be diagnosed by $\hat k$.   In panel 2 we  calculate mean log predictive density (lpd) of  VI approximation and true posterior using 200 independent test sets.   Larger $\rho$ leads to worse mean-field approximation, while  prediction becomes easier. Consequently, monitoring lpd does not diagnose the VI behavior; it increases (misleadingly suggesting better fit) as $\rho$ increases.  In this special case, VI has larger lpd than the true posterior, due to the VI under-dispersion and the model misspecification. Indeed, if viewing  lpd as a function $h(\beta)$, it is the discrepancy between VI lpd and true lpd that reveals the  VI performance, which can also be diagnosed by $\hat k$. Panel 3 shows a sharp increase of lpd discrepancy around $\hat k=0.7$, consistent with the empirical threshold we suggest.

\begin{figure}
	\begin{center}
		\includegraphics[width=\columnwidth]{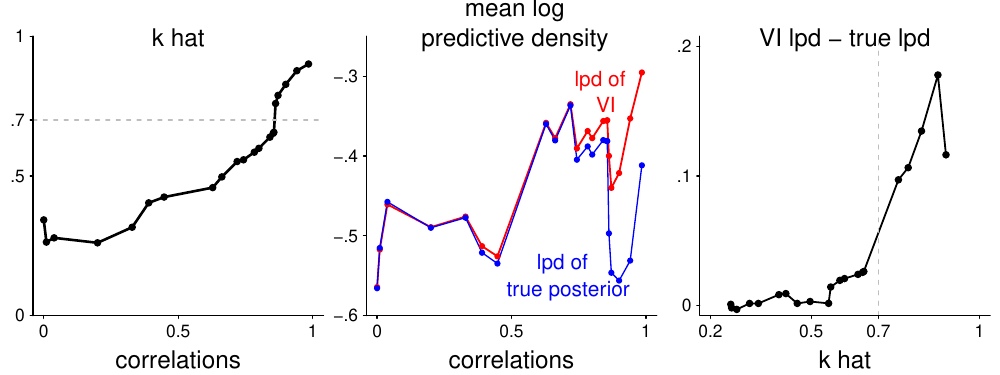}
	\end{center}\vspace{-0.15in}
	\caption{\em In the logistic regression example, as the correlation in design matrix increase, the  correlation in parameter space also increases, leading to larger $\hat k$. Such flaw is hard to tell from the VI log predictive density (lpd), as a larger correlation makes the prediction easier.  $\hat k$ diagnose the discrepancy of VI lpd and true posterior lpd, with a sharp jump at $0.7$.} \label{fig:logistic_lpd}  \vspace{-0.05in}
\end{figure}

Figure  \ref{fig:logistic_mse} compares the first  and second moment  root mean square errors (RMSE) $||E_p \beta -E_{q^*} \beta ||_2$ and $||E_p \beta^2 -E_{q^*} \beta^2 ||_2$ in the previous example using three estimates: (a) VI without post-adjustment, (b) VI adjusted by vanilla importance sampling, and  (c) VI adjusted by PSIS. 

\begin{figure}
	\begin{center}
		\includegraphics[width=\columnwidth]{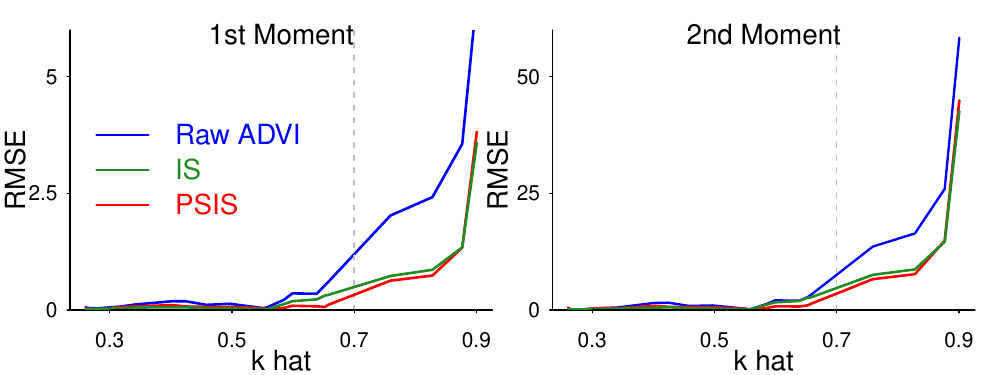}
	\end{center}\vspace{-0.15in}
	\caption{\em In the logistic regression with varying  correlations, the $\hat k$  diagnoses the  root mean square of first and second  moment  errors.  No estimation is reliable when $\hat k>0.7$. Meanwhile, PSIS adjustment always shrinks the VI estimation errors.    } \label{fig:logistic_mse} \vspace{-0.05in}
\end{figure}

PSIS diagnostic  accomplishes two tasks here: (1) A small $\hat k$ indicates that VI approximation is  reliable. When $\hat k>0.7$, all estimations are no longer reasonable so the user should be alerted. (2) It further improves the approximation using PSIS adjustment,  leading to a quicker convergence rate and  smaller mean square errors  for both  first and second moment estimation.  Plain importance sampling has larger RMSE for it suffers from a larger variance.

\subsection{Re-parametrization in a  Hierarchical Model} \label{sec:school}

The \emph{Eight-School Model}  \citep[Section 5.5]{gelman2013bayesian} is the simplest Bayesian hierarchical normal model.  
Each school reported  the treatment effect  mean  $y_i$  and standard deviation $\sigma_i$  separately.   There was no prior reason to believe that any of the treatments were more effective than any other, so we model them as independent experiments:
\begin{align*}
	 y_j|\theta_j   & \sim \mbox{N} (\theta_j, \sigma_j^2), \quad  \theta_j |\mu, \tau  \sim \mbox{N} (\mu, \tau^2), \quad 1 \leq j \leq 8, \\
	 \mu &\sim \N(0,5),  \quad  \quad    \tau  \sim \mathrm{half\!\!-\!\!Cauchy}(0, 5).
\end{align*}
where $\theta_j$ represents the  treatment effect in school $j$, and $\mu$ and  $\tau$ are the hyper-parameters shared across all schools. 

In this hierarchical model, the conditional variance of $\theta$ is strongly dependent on the  standard deviation $\tau$, as shown by the joint sample of $\mu$ and  $\log \tau$ in the bottom-left corner in  Figure \ref{fig:8school_psis}.   The Gaussian assumption in ADVI  cannot capture such structure. More interestingly,  ADVI over-estimates the posterior variance for all parameters $\theta_1$ through $\theta_8$, as shown by positive biases of their posterior standard deviation in the last panel.  In fact, the posterior mode  is at $\tau=0$, while the entropy penalization keeps VI estimation  away from it,  leading to an overestimation due to the funnel-shape. Since the conditional expectation $ \E[\theta_i | \tau, y, \sigma ]= \left(  {\sigma_j^{-2}} +{\tau^{-2}}\right) ^{-1} $ is an increasing  function on $\tau$,  a positive bias of $\tau$ produces over-dispersion of $\theta$.

 \begin{figure}
 	\begin{center}
 		\includegraphics[width=\columnwidth]{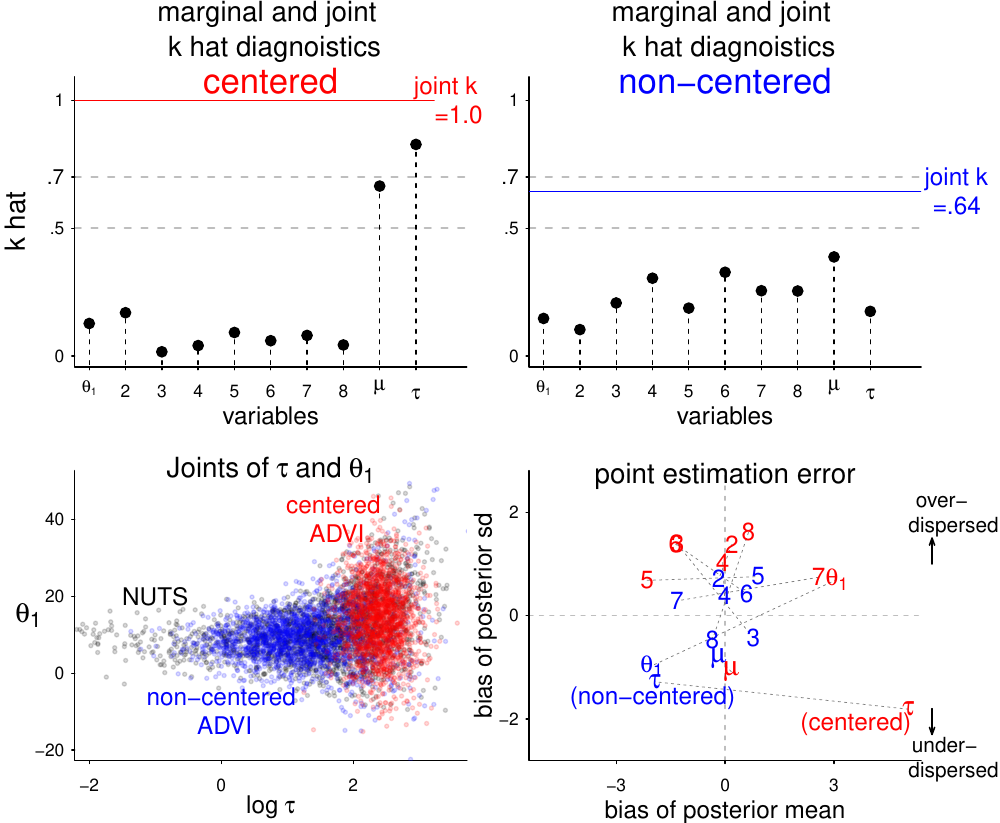}
 	\end{center}\vspace{-0.15in}
 	\caption{\em The upper two panels shows the joint and marginal PSIS diagnostics of the eight-school example. The centered parameterization has $\hat k > 0.7$, for it cannot capture the funnel-shaped dependency between $\tau$ and $\theta$. The bottom-right panel shows the bias of posterior mean and standard errors of marginal distributions. Positive bias of $\tau$ leads to over-dispersion of $\theta$. } \label{fig:8school_psis} \vspace{-0.05in}
 \end{figure}

 The top left panel shows the marginal and joint PSIS diagnostics.  The joint $\hat k$  is 1.00, much beyond the threshold, while  the marginal $\hat k$ calculated through the true marginal distribution for all $\theta$ are misleadingly small due to the over-dispersion.

Alerted by such large $\hat{k}$, researchers should seek some improvements, such as re-parametrization.  The \emph{non-centered parametrization}  extracts  the dependency between $\theta$ and $\tau$ through a transformation $	\theta^* = (\theta- \mu)/\tau$:
$$ y_j|\theta_j    \sim \mbox{N} (\mu+\tau	\theta^*_j, \sigma_j^2), \quad  \theta^*_j   \sim \mbox{N} (0, 1). $$
There is no general rule to determine whether non-centered parametrization is  better than the centered one and there are many other parametrization forms. Finding the optimal parametrization  can be as hard as finding the true posterior, but $\hat{k}$ diagnostics  always guide the choice of  parametrization.   As shown by the top right panel in Figure \ref{fig:8school_psis},
 the joint $\hat{k}$ for the  non-centered ADVI decreases to $0.64$ which indicated the approximation is not perfect but reasonable and usable. The bottom-right panel demonstrates that the re-parametrized ADVI posterior is much closer to the truth, and has  smaller biases for both first and second moment estimations.

 \begin{figure}
	\begin{center}
		\includegraphics[width=\columnwidth]{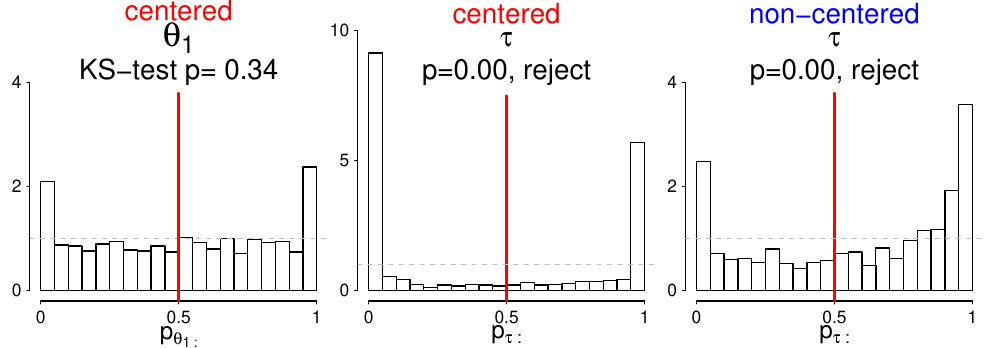}
	\end{center}\vspace{-0.15in}
	\caption{\em In the eight-school example, the VSBC diagnostic verifies VI estimation of $\theta_1$ is unbiased  as the distribution of $p_{\theta_{1:}}$ is symmetric. $\tau$ is overestimated in the centered parametrization and underestimated in the non-centered one, as told by the right/ left skewness of $p_{\tau:}$. } \label{fig:8school_VSBC} \vspace{-0.05in}
\end{figure}

We can  assess the marginal estimation using VSBC diagnostic, as summarized in Figure \ref{fig:8school_VSBC}. In the centered parametrization, the point estimation for $\theta_1$ is in average unbiased, as the two-sided KS-test is not rejected.   
The histogram for $\tau$ is right-skewed, for we can reject one-sided KS-test with the alternative to be $p_{\tau:}$ being stochastically smaller  than $p_{\tau:} $. Hence we conclude $\tau$ is over-estimated in the centered parameterization. On the contrast, the non-centered $\tau$ is negatively biased, as diagnosed by the left-skewness of $p_{\tau:}$. Such conclusion is consistent with the bottom-right panel in Figure \ref{fig:8school_psis}.

To sum up, this example illustrates how the Gaussian family assumption can be unrealistic even for a simple hierarchical model. It also clarifies VI posteriors can be both over-dispersed and under-dispersed, depending crucially on the true parameter dependencies. Nevertheless, the recommended PSIS and VSBC diagnostics provide a practical summary of the computation result.

\subsection{Cancer Classification Using  Horseshoe Priors} \label{HS}
We illustrate how the proposed diagnostic methods work in the Leukemia microarray cancer dataset that contains  $D=7129$ features and $n=72$ observations. Denote $y_{1:n}$ as binary outcome and $X_{n\times D}$ as the predictor, the logistic regression with a regularized horseshoe prior \citep{piironen2017sparsity} is given by
\begin{align*}
&y |\beta \sim \mathrm{Bernoulli}\left(  \mathrm{logit}^{-1}\left(  X\beta\right)  \right) , \quad \beta_j|\tau, \lambda, c \sim \N(0, \tau^2 \tilde \lambda_j^2) ,\\
     & \lambda_j \sim \mathrm{C}^{+} (0,1),   \quad   \tau   \sim \mathrm{C}^{+} (0,\tau_0), \quad c^2 \sim \mathrm{Inv\!\!-\!\!Gamma} (2,8).
\end{align*}
where $\tau>0$ and $\lambda>0$ are global and local shrinkage parameters, and $\tilde \lambda_j^2   =  c^2 \lambda_j^2 /\left( c^2 + \tau^2 \lambda_j^2\right) $.  The regularized horseshoe prior adapts to the sparsity   and allows us to specify a minimum level of regularization to the largest values.

ADVI is computationally appealing for it only takes a few minutes while MCMC sampling takes hours on this dataset. However,  PSIS diagnostic gives $\doublehat k=9.8$ for ADVI, suggesting the VI approximation is not even close to the true posterior. Figure \ref{fig:horseshoe} compares the ADVI and true posterior density of $\beta_{1834}$, $\log \lambda_{1834}$ and $\tau$. The Gaussian assumption makes it impossible to recover the bimodal distribution of some $\beta$. 

 \begin{figure}[ht]
	\begin{center}
		\includegraphics[width=\columnwidth]{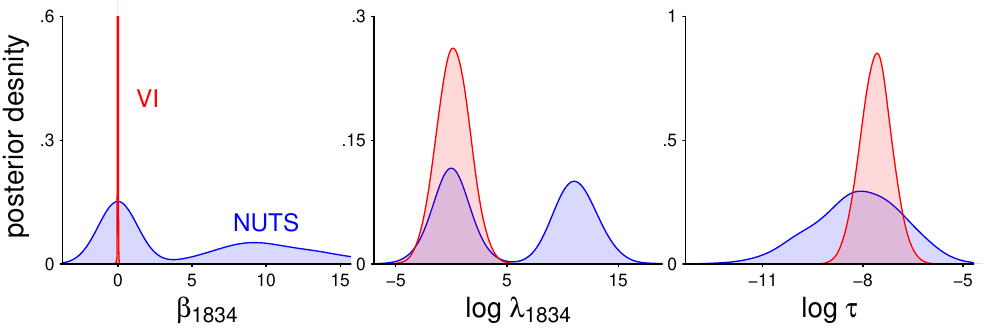}
	\end{center}\vspace{-0.05in}
	\caption{\em The comparison of ADVI and true posterior density of $\theta_{1834}$, $\log \lambda_{1834}$ and $\tau$ in the horseshoe logistic regression. ADVI misses the right mode of $\log \lambda$, making $\beta \propto \lambda$ become a spike. } \label{fig:horseshoe} \vspace{-0.08in}
\end{figure}

 \begin{figure}[ht]
	\begin{center}
		\includegraphics[width=\columnwidth]{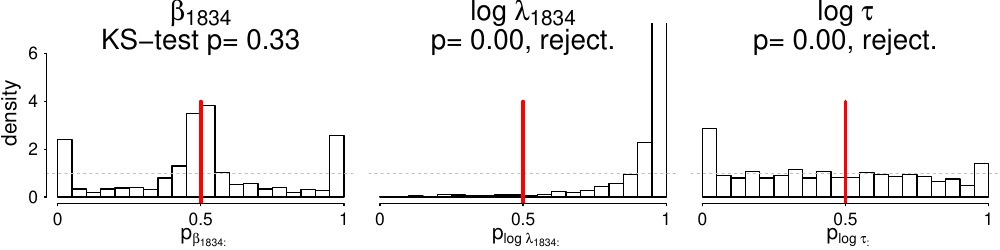}
	\end{center}\vspace{-0.05in}
	\caption{\em VSBC test in the horseshoe logistic regression. It tells the positive bias of $\tau$  and negative bias of $\lambda_{1834}$. $\beta_{1834}$ is in average unbiased for its  symmetric prior.} \label{fig:horseshoe_VSBC} \vspace{-0.08in}
\end{figure}

The VSBC diagnostics  as shown in Figure \ref{fig:horseshoe_VSBC} tell  the negative bias of local shrinkage $\lambda_{1834}$ from the left-skewness of $p_{\log \lambda_{1834}}$, which is the consequence of the right-missing mode.  For compensation, the global shrinkage $\tau$ is over-estimated, which is in agreement with the right-skewness of $p_{\log \tau}$.   $\beta_{1834}$ is in average unbiased, even though it is strongly underestimated from  in Figure \ref{fig:horseshoe}.  This is because VI estimation is mostly a spike at 0 and its prior is symmetric.   As we have explained, passing the VSBC test means the average unbiasedness,  and does not ensure the unbiasedness for a specific parameter setting. This is the price that  VSBC pays for averaging over all priors.

\section{Discussion}

\subsection{The Proposed Diagnostics are Local}
As no single diagnostic method can tell all problems,  the proposed diagnostic methods have  limitations.  The PSIS diagnostic is limited when the posterior is multimodal as  the samples drawn from  $q(\theta)$ may not cover all the modes of the posterior and the estimation of $ k$ will  be indifferent to the unseen modes. In this sense, the PSIS diagnostic is a \emph{local diagnostic} that will not  detect unseen modes.  For example,  imagine the true posterior is $p= 0.8 \mbox{N}(0, 0.2) +0.2 \mbox{N}(3, 0.2)$ with two isolated modes.  Gaussian family VI  will converge to one of the modes, with the importance ratio to be  a constant number $0.8$ or $0.2$.  Therefore  ${k}$ is 0, failing to penalize the missing density.   In fact, any divergence measure based on samples from the approximation such as $\KL(q, p)$ is \emph{local}. 

 The bi-modality can be detected by multiple over-dispersed initialization.   It can also be diagnosed by other divergence measures such as $\KL(p,q)=  \E_{p} \log(q/p) $, which is computable through PSIS by letting $h= \log(q/p)$.
 
 In practice a marginal missing mode will typically lead to large joint discrepancy that is still detectable by  $\hat k$, such as in Section \ref{HS}. 
  
The  VSBC test, however, samples the true parameter from the prior distribution directly. Unless the prior is too restrictive, the VSBC $p$-value will diagnose the potential missing mode. 
 

\subsection {Tailoring Variational Inference for Importance Sampling  }
The PSIS diagnostic makes use of stabilized IS to diagnose VI. By contrast, can we modify VI to give a better IS proposal?
 
\citet{geweke1989bayesian} introduce an optimal proposal distribution based on split-normal and split-$t$,  implicitly minimizing the $\chi^2$ divergence between $q$ and $p$. Following this idea, we could first find the usual VI solution, and then switch Gaussian to Student-$t$ with a scale chosen to minimize the $\chi^2$ divergence.  

More recently, some progress is made to carry out  variational inference based on  R\'enyi  divergence \citep{li2016renyi, dieng2017variational}. But a big $\alpha$, say $\alpha=2$, is only meaningful  when the proposal has a much heavier tail than the target. For example,  a normal family does not contain any member having finite $\chi^2$ divergence to a Student-$t$ distribution, leaving the optimal objective function defined by \citet{dieng2017variational}  infinitely large.  
 
 There are several research directions. First, our proposed diagnostics are applicable to these modified approximation methods.
 Second,  PSIS re-weighting will give a more reliable importance ratio estimation  in the R\'enyi  divergence variational inference. 
 Third,  a continuous $\hat k$ and the corresponding  $\alpha$ are more desirable than only fixing $\alpha=2$, as the latter one does not necessarily have a finite result. Considering the role $\hat k$ plays in the importance sampling, we can optimize the discrepancy $D_\alpha (q || p)$ and   $\alpha>0$ simultaneously. We  leave this for future research.

 \clearpage
\section*{Acknowledgements}
The authors acknowledge support from the Office of Naval Research grants N00014-15-1-2541 and N00014-16-P-2039, 
the National Science Foundation grant CNS-1730414, and the Academy of Finland grant 313122.

\bibliography{diagnostics}
\bibliographystyle{icml2018}
\appendix
\newpage

\onecolumn
\icmltitle{Supplement to ``Yes, but Did It Work?: Evaluating Variational Inference"  }
\icmltitlerunning{Supplement to ``Evaluating Variational Inference"}

   \renewcommand{\thefigure}{\Roman{figure}} 
\setcounter{proposition}{0}
 \section {Sketch of Proofs}
 \subsection{ Proof to Proposition 1: Marginal $\hat k$ in PSIS diagnostic}
 \begin{proposition}
 	For any two distributions $p$ and $q$ with support $\Theta$ and the margin index $i$,  if there is a number $\alpha>1$ satisfying $ E_q \left( p(\theta)/q(\theta)\right) ^ \alpha <\infty$, then $ E_q \left( p(\theta_i)/q(\theta_i)\right) ^ \alpha <\infty$.
 \end{proposition}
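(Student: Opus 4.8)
The plan is to recognize the quantity $\E_q\!\left(p(\theta)/q(\theta)\right)^\alpha$ as the integral $\int_\Theta p(\theta)^\alpha q(\theta)^{1-\alpha}\,d\theta$ and to show that passing to the $i$-th marginal can only decrease it. I write $\theta=(\theta_i,\theta_{-i})$, where $\theta_{-i}$ collects the remaining coordinates, and let $q(\theta_{-i}\mid\theta_i)=q(\theta)/q(\theta_i)$ be the conditional density. The central tool is the joint convexity on the positive orthant of the function $f(x,y)=x^\alpha y^{1-\alpha}$ for $\alpha>1$: since $g(t)=t^\alpha$ is convex on $[0,\infty)$, its perspective $f(x,y)=y\,g(x/y)$ is jointly convex, so Jensen's inequality applies to $f$.

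First I would fix $\theta_i$ and regard $\nu(d\theta_{-i})=q(\theta_{-i}\mid\theta_i)\,d\theta_{-i}$ as a probability measure. Since $p(\theta_i)=\int p(\theta_i,\theta_{-i})\,d\theta_{-i}$ is the $\nu$-expectation of $p(\theta_i,\theta_{-i})/q(\theta_{-i}\mid\theta_i)$, and $q(\theta_i)$ is the (constant) $\nu$-expectation of $q(\theta_i)$, Jensen's inequality for the jointly convex $f$ gives the pointwise-in-$\theta_i$ bound
\begin{align*}
\frac{p(\theta_i)^\alpha}{q(\theta_i)^{\alpha-1}}
= f\bigl(p(\theta_i),\,q(\theta_i)\bigr)
\le \int f\!\left(\frac{p(\theta_i,\theta_{-i})}{q(\theta_{-i}\mid\theta_i)},\,q(\theta_i)\right)q(\theta_{-i}\mid\theta_i)\,d\theta_{-i}.
\end{align*}
Simplifying the integrand and using $q(\theta_{-i}\mid\theta_i)\,q(\theta_i)=q(\theta)$ collapses the right-hand side to $\int p(\theta_i,\theta_{-i})^\alpha\, q(\theta)^{1-\alpha}\,d\theta_{-i}$.

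Finally I would integrate this inequality over $\theta_i$ and apply Fubini's theorem:
\begin{align*}
\E_q\!\left(\frac{p(\theta_i)}{q(\theta_i)}\right)^{\!\alpha}
=\int \frac{p(\theta_i)^\alpha}{q(\theta_i)^{\alpha-1}}\,d\theta_i
\le \int_\Theta p(\theta)^\alpha q(\theta)^{1-\alpha}\,d\theta
=\E_q\!\left(\frac{p(\theta)}{q(\theta)}\right)^{\!\alpha}<\infty,
\end{align*}
which is exactly the claim. Conceptually this is nothing more than the data-processing inequality for the order-$\alpha$ R\'enyi (equivalently $\chi^\alpha$) divergence under the coarse-graining map $\theta\mapsto\theta_i$, but the perspective-function argument makes it self-contained.

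The main obstacle I anticipate is establishing the joint convexity of $f$ and setting up Jensen's inequality against the conditional measure $\nu$ rigorously; once the perspective viewpoint is in place the remaining steps are routine, modulo the standard caveat that $q$ be positive on the support of $p$ so that all ratios and conditional densities are well defined.
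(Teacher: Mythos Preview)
Your argument is correct, but the paper takes a more elementary route. The paper factors the density ratio as
\[
\frac{p(\theta)}{q(\theta)}=\frac{p(\theta_i)}{q(\theta_i)}\cdot\frac{p(\theta_{-i}\mid\theta_i)}{q(\theta_{-i}\mid\theta_i)},
\]
and then applies ordinary one-variable Jensen to the conditional factor: since $t\mapsto t^\alpha$ is convex and $\int (p(\theta_{-i}\mid\theta_i)/q(\theta_{-i}\mid\theta_i))\,q(\theta_{-i}\mid\theta_i)\,d\theta_{-i}=1$, the inner integral of its $\alpha$-th power is at least $1$, and the joint $\alpha$-moment dominates the marginal one immediately. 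Your approach via the perspective function $f(x,y)=x^\alpha y^{1-\alpha}$ and its joint convexity is a valid alternative; it is slightly heavier machinery but has the merit of making the data-processing interpretation explicit (your bound is really the R\'enyi data-processing inequality for the coarse-graining $\theta\mapsto\theta_i$, which does not rely on the multiplicative factorization of the ratio). Both proofs share the same caveat you note about $q$ being positive on the support of $p$.
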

 
\begin{proof}
		Without loss of generality, we could assume $\Theta=\R^K$, otherwise a smooth transformation is conducted. 
		
For any $1\leq i \leq K$, $p(\theta_{-i} | \theta_{i} )$ and $q(\theta_{-i} | \theta_{i} )$ define the conditional distribution of $\left( \theta_1,\dots,\theta_{i-1}, \theta_{i+1},\dots,\theta_K \right)  \in \R^{K-1}$ given $\theta_i$ under the  true posterior $p$ and the approximation $q$ separately.
	
	For any given index $\alpha>1$, Jensen inequality yields 
	\begin{align*}
	\int_{\R^{K-1}} \left( \frac{p(\theta_{-i} | \theta_{i} )}{q(\theta_{-i} | \theta_{i} )}\right)^\alpha q(\theta_{-i} | \theta_{i} ) \geq  \left( \int_{\R^{K-1}}  \frac{p(\theta_{-i} | \theta_{i} )}{q(\theta_{-i} | \theta_{i} )} q(\theta_{-i} | \theta_{i} )  \right)^\alpha =1
	\end{align*}
	Hence  \begin{align*}
	\int_{\R^{K}}\left(  \frac{ p (\theta)}{q (\theta)}\right) ^\alpha q(\theta) d\theta  &=   \int_{\R^{K-1}}  \int_{\R} \left(  \frac{ p (\theta_i) p (\theta_{-i} | \theta_i)   }{q (\theta_i)  q (\theta_{-i} | \theta_i) } \right) ^\alpha q (\theta_i)  q (\theta_{-i} | \theta_i) d \theta_i d\theta_{-i} \\
	&=   \int_{\R}  \left(  \int_{\R^{K-1}}   \left(  \frac{ p (\theta_{-i} | \theta_i)   }{ q (\theta_{-i} | \theta_i) } \right) ^\alpha   q (\theta_{-i} | \theta_i)   d\theta_{-i}   \right)    \left(  \frac{p (\theta_i) }{q(\theta_i)} \right) ^\alpha  q(\theta_i)    d \theta_i  \\
	&\geq   \int_{\R}      \left(  \frac{p (\theta_i) }{q(\theta_i)} \right) ^\alpha  q(\theta_i)    d \theta_i 
	\end{align*}
\end{proof}

 \subsection{Proof to Proposition 2: Symmetry in VSBC-Test }
 \begin{proposition}
 For a one-dimensional  parameter  $\theta$   that is of interest,  Suppose in addition we have: \\
 (i) the VI approximation  $q$ is symmetric;\\
 (ii) the true posterior $p(\theta |y) $ is symmetric.\\
 If the VI estimation $q$ is unbiased, i.e., $$   \E_{\theta\sim q(\theta|y)}\theta =  \E_{\theta \sim p(\theta|y)} \theta, \forall y$$ 
 Then the  distribution of  VSBC $p$-value is symmetric. \\
If the VI estimation  is positively/negatively biased, 
  then the  distribution of  VSBC $p$-value is right/left skewed. 
 \end{proposition}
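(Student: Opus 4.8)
The plan is to write the VSBC $p$-value as a probability-integral-transform quantity and then let the two symmetry hypotheses do the work. First I would invoke the generative structure of the diagnostic: since $\theta^{(0)}\sim p(\theta)$ and $y\sim p(y\mid\theta^{(0)})$, Bayes' rule shows that, conditional on $y$, the draw $\theta^{(0)}$ is distributed according to the true posterior $p(\theta\mid y)$ — this is exactly the observation of Cook et al.\ quoted above, now applied through the variational CDF rather than the true one. Hence the VSBC $p$-value is $p=F_q(\theta^{(0)}\mid y)$ with $\theta^{(0)}\mid y\sim p(\cdot\mid y)$, where $F_q$ is the CDF of the one-dimensional VI approximation. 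Working conditionally on $y$, the CDF of this $p$-value is $G_y(u)=\Pr\!\big(F_q(\theta^{(0)})\le u\big)=F_p\big(F_q^{-1}(u)\mid y\big)$, and the full VSBC distribution is recovered by averaging $G_y$ over $y\sim p(y)$.

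Next I would feed the symmetry hypotheses into $G_y$. Let $\mu_p(y)$ and $\mu_q(y)$ be the centers of symmetry of $p(\cdot\mid y)$ and $q$; since both densities are symmetric these coincide with the respective means and medians, so ``$\E_q\theta=\E_p\theta$'' is the same as $\mu_q(y)=\mu_p(y)$. Symmetry of $q$ gives $F_q^{-1}(1-u)=2\mu_q-F_q^{-1}(u)$, and symmetry of $p$ gives $F_p(2\mu_p-x)=1-F_p(x)$. Substituting these, with the bias written as $b(y)=\mu_q(y)-\mu_p(y)$, yields the key identity
$$G_y(u)+G_y(1-u)=1+\Big[\,F_p\big(F_q^{-1}(u)\big)-F_p\big(F_q^{-1}(u)-2b(y)\big)\,\Big].$$

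I would then read off the three cases. When the estimate is unbiased, $b(y)\equiv0$, the bracket vanishes and $G_y(u)+G_y(1-u)=1$ for every $u$; this is precisely the statement that $p$ and $1-p$ have the same conditional distribution, hence the same marginal distribution after averaging, i.e.\ the VSBC distribution is symmetric about $1/2$. When the estimate is positively biased, $b(y)>0$, monotonicity of $F_p$ makes the bracket strictly positive, so $G_y(u)+G_y(1-u)>1$, which I would interpret as the $p$-value placing more mass near $0$ than near $1$ — the right-skew asserted in the proposition; negative bias is the mirror image. Because each inequality holds pointwise in $y$, averaging over $y\sim p(y)$ preserves it by linearity of expectation, delivering the marginal conclusion.

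The routine parts are the two symmetry substitutions and the monotonicity step; the part needing the most care is fixing a precise, defensible notion of ``right/left skewed.'' Rather than appealing to a third moment, I would adopt the comparison $G(u)+G(1-u)\gtrless1$ (equivalently, comparing the mass of the $p$-value near $0$ against near $1$) as the operative definition, which is exactly what the one-sided Kolmogorov-Smirnov comparison of $p_{i:}$ against $1-p_{i:}$ in Algorithm \ref{alg:VSBC} detects, so the theoretical statement matches the implemented test. A secondary subtlety is that $b(y)$ may vary with $y$: the clean marginalization requires the sign of $b(y)$ to be consistent across datasets (or to dominate after averaging), which I would state explicitly as the operative meaning of ``positively/negatively biased.''
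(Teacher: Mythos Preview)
Your proposal is correct and follows essentially the same route as the paper's proof: both use the Cook et al.\ observation that $\theta^{(0)}\mid y\sim p(\cdot\mid y)$, then exploit the symmetry of $q$ to relate the $(1-u)$-quantile to the $u$-quantile, swap centers via the unbiasedness assumption, and finish with the symmetry of $p$. Your version is slightly more explicit---deriving the identity $G_y(u)+G_y(1-u)=1+[F_p(F_q^{-1}(u))-F_p(F_q^{-1}(u)-2b(y))]$ and spelling out both the operational meaning of ``skewed'' and the need for $b(y)$ to have a consistent sign---whereas the paper compresses the same steps into a short quantile chain and handles the biased case by noting which equality becomes an inequality.
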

In the proposition we write $q(\theta|y)$ to emphasize that the VI approximation also depends on the observed data.
 \begin{proof}
 	First, as the same logic in \cite{cook2006validation}, when $\theta^{(0)}$ is sampled from its prior  $p(\theta)$ and simulated data $y$ sampled from likelihood $p(y|\theta^{(0)})$, $(y,\theta^{(0)} )$ represents a sample from the joint distribution $p(y, \theta)$  and therefore $\theta^{(0)}$ can be viewed as a draw from $p(\theta|y)$, the true posterior distribution of $\theta$ with $y$ being observed.
 	
 	 We denote $q(\theta^{(0)})$ as the VSBC $p$-value of the sample $\theta^{(0)}$.  Also denote $Q_x(f)$ as the $x-$quantile ($x \in [0,1]$) of any distribution $f$. 
 	To prove the result, we need to show 
 	$$ 1- \Pr(q(\theta^{(0)}) < x) = \Pr(q(\theta^{(0)}) < 1-x), \forall x \in [0,1], $$
 \begin{align*}
\mathrm{LHS}&=  \Pr\left( q(\theta^{(0)}) > x\right) \\
&=  \Pr\left(   \theta^{(0)}   > Q_x \left( q(\theta| y) \right)  \right).
\end{align*}
 \begin{align*}
\mathrm{RHS}= \Pr\left( \theta^{(0)} < Q_{1-x} \left( q(\theta| y) \right) \right)  &= \Pr\left( \theta^{(0)} < 2 E_{q(\theta|y)}\theta - Q_{x} \left( q(\theta| y) \right) \right)  \\
&= \Pr\left( \theta^{(0)} < 2 E_{p(\theta|y)}\theta - Q_{x} \left( q(\theta| y) \right) \right)  \\
&= \Pr\left( \theta^{(0)} > Q_{x} \left( q(\theta| y) \right) \right)  \\
&= \mathrm{LHS}
\end{align*}
The first equation above	uses the symmetry of $q(\theta|y)$, the second equation comes from the  the unbiasedness condition.  The third 	is the result of the symmetry of $p(\theta|y)$.

If the VI estimation  is positively biased, $ \E_{\theta\sim q(\theta|y)}\theta >  \E_{\theta \sim p(\theta|y)} \theta, \forall y,$ then we change the second 
equality sign into a  less-than sign. 
 \end{proof}

 \section {Details of Simulation Examples}
In this section, we give more detailed description of the simulation examples in the  manuscript.  We use Stan \citep{stan2017stan} to implement both automatic differentiation variational inference (ADVI) and Markov chain Monte Carlo (MCMC) sampling. We implement Pareto smoothing through R package ``loo'' \citep{loopackage}.
We also provide all the source code in \url{https://github.com/yao-yl/Evaluating-Variational-Inference}.
 
 \subsection{Linear and Logistic Regressions}
In Section 4.1, We start with a Bayesian linear regression $y\sim \N(X\beta, \sigma^2)$ without intercept. The prior is set as  $\{\beta_i\}_{i=1}^{d}\sim \N(0,1), \sigma\sim \mathrm{gamma}(0.5,0.5)$. We  fix sample size $n=10000$ and number of regressors $d=100$.    Figure \ref{ls_linear_reg} displays the Stan code.
   \begin{figure}[htb]
 \begin{lstlisting}[language=C++ ]
 data {
 int <lower=0> n;     //number of observations, we fix n=10000 in the simulation;
 int <lower=0> d;      //number of predictor variables,  fix d=100;
 matrix [n,d] x ;       // predictors;
 vector [n] y;         // outcome;
 }
 parameters {
 vector [d] b;    // linear regression coefficient;
 real <lower=0> sigma;    //linear regression std;
 }
 model {
 y ~ normal(x * b, sigma);  
 b ~ normal(0,1);  // prior for  regression coefficient;
 sigma ~ gamma(0.5,0.5);    // prior for  regression std.
 }
 \end{lstlisting}\caption{\em Stan code for linear regressions}\label{ls_linear_reg}
 \end{figure}

 We find ADVI can be sensitive to  the stopping time. Part of the reason is  the objective function itself is evaluated through Monte Carlo samples, producing  large uncertainty.  In the current version of Stan, ADVI  computes the running average and running median of the relative ELBO norm changes. Should either number fall below a threshold \texttt{tol\_rel\_obj}, with the default value to be 0.01,   the algorithm is considered converged. 

In Figure 1 of the main paper, we run VSBC test on ADVI approximation. ADVI is deliberately tuned in a conservative way. The convergence tolerance is set as \texttt{tol\_rel\_obj}=$10^{-4}$  and the learning rate is $\eta =0.05$.  The predictor $X_{10^5\times10^2}$ is fixed in all replications and is generated independently from $\N(0,1)$.  To avoid multiple-comparison problem, we pre-register the first and second coefficients $\beta_1$ $\beta_2$ and $\log \sigma$ before the test. The VSBC diagnostic is based on $M=1000$ replications.

In Figure 2 we independently generate  each coordinate of $\beta$ from $\N(0,1)$ and set a relatively large variance $\sigma=2$.  The predictor $X$ is  generated independently from $\N(0,1)$ and $y$ is sampled from the normal likelihood. We vary the threshold \texttt{tol\_rel\_obj} from $0.01$ to $10^{-5}$ and show the trajectory of $\hat k$ diagnostics.  The $\hat k$ estimation, IS and  PSIS adjustment are all  calculated from $ S= 5 \times 10^4$ posterior samples. We ignore the ADVI  posterior sampling time. 
The actual  running time is based on a laptop experiment result (2.5 GHz processor, 8 cores).The exact sampling time is based on the No-U-Turn Sampler (NUTS, \citealt{hoffman2014no}) in Stan with 4 chains and 3000 iterations in each chain.  We also calculate the root mean square errors (RMSE) of all parameters $ ||E_p[ (\beta, \sigma)]- E_q [(\beta, \sigma) ]||_{L^{2}}$, where $(\beta, \sigma)$ represents the combined vector of all $\beta$ and $\sigma$. To account for the uncertainty,  $\hat k$, running time, and RMSE takes the average of 50 repeated simulations.

 \begin{figure}[htb]
 	\begin{lstlisting}[language=C++ ]
 	data {
 	int <lower=0> n;     //number of observations;
 	int <lower=0> d;      //number of predictor variables; 
 	matrix [n,d] x ;       // predictors; we vary its correlation during simulations.
 	int<lower=0,upper=1> y[n]; // binary outcome;
 	}
 	parameters {
 	vector[d] beta;
 	}
 	model {
 	y ~ bernoulli_logit(x*beta);
 	}
 	\end{lstlisting}\caption{\em Stan code for logistic regressions}\label{ls_logistic_reg}
 \end{figure}
 
 Figure 3 and 4 in the main paper is a simulation result of a logistic regression 
  $$Y \sim \mathrm{Bernoulli} \left( \mathrm{logit}^{-1}( \beta X  ) \right) $$ with a flat prior on $\beta$.  
  We vary the correlation in design matrix by generating $X$ from $\hbox{N}(0, (1-\rho) I_{d\times d}+\rho 1_{d\times d})$, where $1_{d\times d} $ represents the $d$ by $d$ matrix with all elements to be $1$. In this experiment we fix a small number $n=100$ and $d=2$ since the main focus is parameter correlations. We compare $\hat k$ with the log predictive density, which is calculated from 100 independent test data.  The true posterior is from NUTS  in Stan with 4 chains and 3000 iterations each chain.  The $\hat k$ estimation, IS and  PSIS adjustment are  calculated from $10^5$ posterior samples.  To account for the uncertainty,  $\hat k$, log predictive density, and RMSE  are the average of 50 repeated experiments.

 \subsection {Eight-School Model}
 
 The \emph{eight-school model}  is named after \citet[section 5.5]{gelman2013bayesian}. The study was performed for the Educational Testing Service to analyze the effects of a special coaching program on students' SAT-V (Scholastic Aptitude Test Verbal) scores in each of eight high schools.   The outcome variable in each study was the score of a standardized multiple choice test. Each school $i$ separately analyzed  the treatment effect and reported  the mean  $y_i$  and standard deviation of the treatment effect  estimation $\sigma_i$, as summarized in Table \ref{tab:8_school_data}.
 
 \begin {table} 	\begin{center} \small \tabcolsep=0.1cm
 	\begin{tabular}{ccc} 
 		School Index $j$  & Estimated  Treatment  	Effect $y_i$ & Standard Deviation   of Effect   
 		Estimate $\sigma_j$ \\ 
 		\hline 
 		1 & 28 & 15 \\ 
 		2 & 8 & 10 \\ 
 		3 & -3 & 16 \\ 
 		4 & 7 & 11 \\ 
 		5 & -1 & 9 \\ 
 		6 & 1 & 11 \\ 
 		7 & 8 & 10 \\ 
 		8 & 12 & 18 \\   
 	\end{tabular}
 \end{center} \vspace{-0.2 in}\caption{\em School-level observed effects of special preparation on SAT-V scores in eight randomized experiments. Estimates are based on separate analyses for the eight experiments.} \label{tab:8_school_data}
\end{table}

 There was no prior reason to believe that any of the eight programs was more effective than any other or that some were more similar in effect to each other than to any other. Hence, we view them as independent experiments and apply a Bayesian hierarchical normal model:
 \begin{align*}
 y_j|\theta_j   & \sim \mbox{N} (\theta_j, \sigma_j), \quad  \theta_j  \sim \mbox{N} (\mu, \tau), \quad 1 \leq j \leq 8, \\
 \mu &\sim \N(0,5),  \quad  \quad    \tau  \sim \mathrm{half\!\!-\!\!Cauchy}(0,5).
 \end{align*}
 where $\theta_j$ represents the underlying treatment effect in school $j$, while $\mu$ and  $\tau$ are the hyper-parameters that are shared across all schools.

  \begin{figure}[htb]
 	\begin{lstlisting}[language=C++ ]
 	data {
 	int<lower=0> J;      // number of schools
 	real y[J];         // estimated treatment
 	real<lower=0> sigma[J];     // std  of estimated effect
 	}
 	
 	parameters {
 	real theta[J];    // treatment effect in school j
 	real mu;    // hyper-parameter of mean
 	real<lower=0> tau;   // hyper-parameter of sdv
 	}
 	model {
 	theta ~ normal(mu, tau);
 	y ~ normal(theta, sigma);
 	mu ~ normal(0, 5);    // a non-informative prior
 	tau ~ cauchy(0, 5);
 	}
 	\end{lstlisting}\caption{\em Stan code for centered parametrization in the eight-school model. It  leads to strong dependency between $tau$ and $theta$.}\label{ls_school_cp}
 \end{figure}

 \begin{figure}[htb]
 \begin{lstlisting}[language=C++ ]
 data {
 int<lower=0> J;      // number of schools
 real y[J];         // estimated treatment
 real<lower=0> sigma[J];     // std  of estimated effect
 }
 parameters {
 vector[J] theta_trans;   // transformation of theta
 real mu;    // hyper-parameter of mean
 real<lower=0> tau;  // hyper-parameter of sd
 }
 transformed parameters{
 vector[J] theta;     // original theta
 theta=theta_trans*tau+mu;
 }
 model {
 theta_trans ~normal (0,1);
 y ~ normal(theta, sigma);
 mu ~ normal(0, 5);    // a non-informative prior
 tau ~ cauchy(0, 5);
 }
 \end{lstlisting}\caption{\em Stan code for non-centered parametrization in the eight-school model. It  extracts the dependency between $tau$ and $theta$.}\label{ls_school_ncp}
 \end{figure}

 There are two parametrization forms being discussed: \emph{centered parameterization} and  \emph{ non-centered parameterization}. Listing    \ref{ls_school_cp} and \ref{ls_school_ncp} give two Stan codes separately.
The true posterior is from NUTS  in Stan with 4 chains and 3000 iterations each chain.    The $\hat k$ estimation and  PSIS adjustment are  calculated from $S=10^5$ posterior samples. The marginal $\hat k$ is calculated by using the NUTS  density, which is typically unavailable for more complicated problems in practice.
 
The VSBC test in Figure  6 is based on $M=1000$ replications and we  pre-register the  first treatment effect $\theta_1$ and group-level standard error $\log \tau$ before the test. 
 
As discussed in Section 3.2, VSBC assesses the average calibration of the point estimation. Hence the result depends on the choice of prior. For example, if we instead set the prior to be
$$ \mu \sim \N(0,50),  \quad  \quad    \tau  \sim \mathrm{N}^{+}(0,25),  $$ 
which  is essentially flat in the region of interesting part of the likelihood and more in agreement with the prior knowledge, then the result of VSBC test change to Figure \ref{fig:school_VSBC_flat}. Again, the skewness of $p$-values verifies VI estimation of $\theta_1$ is in average unbiased  while $\tau$ is biased in both centered and non-centered parametrization.

 \begin{figure}[!h]
	\begin{center}
		\includegraphics[width=0.6\linewidth]{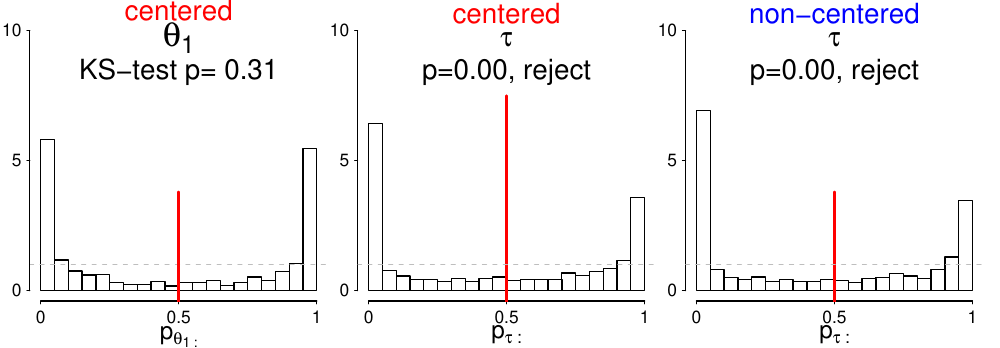}
	\end{center}\vspace{-0.15in}
	\caption{\em  The VSBC diagnostic of the eight-school example under a non-informative prior $\mu \sim \N(0,50), \,   \tau  \sim \mathrm{N}^{+}(0,25)$.  The skewness of $p$-values verifies VI estimation of $\theta_1$ is in average unbiased  while $\tau$ is biased in both centered and non-centered parametrization.} \label{fig:school_VSBC_flat} \vspace{-0.05in}
\end{figure}

 \subsection {Cancer Classification Using  Horseshoe Priors}
In Section 4.3 of the main paper we replicate the cancer classification under regularized horseshoe prior as first introduced by \citet{piironen2017sparsity}.

 The Leukemia microarray cancer classification dataset \footnote{The Leukemia classification dataset can be downloaded from  \url{http://featureselectiocn.asu.edu/datasets.php}}. 
 It contains  $n=72$ observations  and $d=7129$ features $X_{n\times d}$. $X$ is standardized before any further process.  The outcome $y_{1:n}$ is binary, so we can fit a logistic regression
 $$y_i |\beta \sim \mathrm{Bernoulli}\left(  \mathrm{logit}^{-1}\left(   \sum _{j=1}^d \beta_j x_{ij}  +\beta_0 \right )  \right).  $$
 There are far more predictors than observations, so we expect only a few of predictors to be related and therefore have a regression coefficient distinguishable from zero. Further, many predictors are correlated, making it necessary to have a regularization.
 
 To this end, we apply the \emph{regularized horseshoe prior}, which is a generalization of \emph{horseshoe prior}. 
\begin{align*}
& \beta_j|\tau, \lambda, c  \sim \N(0, \tau^2 \tilde \lambda_j^2) ,  \quad  \quad c^2 \sim \mathrm{Inv\!\!-\!\!Gamma} (2,8), \\
 & \lambda_j  \sim \mathrm{Half\!\!-\!\!Cauchy}(0,1),   \quad   \tau | \tau_0   \sim \mathrm{Half\!\!-\!\!Cauchy} (0,\tau_0).
\end{align*}
The scale of the global shrinkage is set according to the recommendation $\tau_0=   2 \left( n^{1/2} (d-1)\right) ^{-1}$ There is no reason to shrink intercept so we put $\beta_0 \sim \N(0,10)$. 
The Stan code is summarized  in Figure \ref{ls_HS}.

We first run NUTS in Stan with 4 chains and 3000 iterations each chain.  We manually pick $\beta_{1834}$, the coefficient that has the largest posterior mean. The posterior distribution of it is  bi-modal with one spike at 0.

ADVI is implemented using the same parametrization and we  decrease the learning rate $\eta$ to 0.1 and the threshold  \texttt{tol\_rel\_obj} to $0.001$

The $\hat k$ estimation is based on $S=10^4$ posterior samples. Since $\hat k$ is extremely large, indicating VI is far away from the true posterior and no adjustment will work,  we do not further conduct PSIS.

\begin{figure}[H]
	\begin{lstlisting}[language=C++ ]
	data {
	int<lower=0> n;				    // number of observations
	int<lower=0> d;             // number of predictors
	int<lower=0,upper=1> y[n];	// outputs
	matrix[n,d] x;				      // inputs
	real<lower=0> scale_icept;	// prior std for the intercept
	real<lower=0> scale_global;	// scale for the half-t prior for tau
	real<lower=0> slab_scale;
	real<lower=0> slab_df;
	}
	parameters {
	real beta0; // intercept
	vector[d] z; // auxiliary parameter
	real<lower=0> tau;			// global shrinkage parameter
	vector<lower=0>[d] lambda;	// local shrinkage parameter
	real<lower=0> caux; // auxiliary
	}
	transformed parameters {
	real<lower=0> c;
	vector[d] beta;				// regression coefficients
	vector[n] f;				// latent values
	vector<lower=0>[d] lambda_tilde;
	c = slab_scale * sqrt(caux);
	lambda_tilde = sqrt( c^2 * square(lambda) ./ (c^2 + tau^2* square(lambda)) );
	beta = z .* lambda_tilde*tau;
	f = beta0 + x*beta;
	}
	model {
	z ~ normal(0,1);
	lambda ~ cauchy(0,1);
	tau ~ cauchy(0, scale_global);
	caux ~ inv_gamma(0.5*slab_df, 0.5*slab_df);
	beta0 ~ normal(0,scale_icept);
	y ~ bernoulli_logit(f);
	} \end{lstlisting}\caption{\em Stan code for regularized horseshoe logistic regression.}\label{ls_HS}
\end{figure}

In the VSBC test, we pre-register that pre-chosen coefficient $\beta_{1834}$, $\log \lambda_{1834}$ and global shrinkage $\log \tau$ before the test. The VSBC diagnostic is based on M=1000 replications.

\end{document}